%% file: root.tex
\documentclass[letterpaper, 10 pt, conference]{ieeeconf}  

\IEEEoverridecommandlockouts                              

\usepackage{amsthm}

\usepackage{times}
\usepackage{cite}
\usepackage{multicol}
\usepackage[bookmarks=true]{hyperref}
\usepackage{bm}
\usepackage{amsmath,amsfonts}
\usepackage{algorithm}
\usepackage[noend]{algpseudocode} 
\usepackage{xcolor}
\usepackage{xspace}
\usepackage{graphicx}
\usepackage{makecell}
\usepackage{float}
\usepackage{stfloats}
\usepackage[caption=false,font=footnotesize,labelfont=rm,textfont=rm]{subfig}
\usepackage{multirow}
\newtheorem{theorem}{Theorem}
\newtheorem{definition}{Definition}
\newtheorem{remark}{Remark}

\newtheorem{example}{Example}

\usepackage{adjustbox}
\usepackage{comment}
\newcommand{\citep}{\cite}

\newcommand\ecost[1]{\ensuremath{}}
\usepackage{amsthm}
\begin{document}


\title{\LARGE \bf Conflict-Based Search for Multi-Agent Path Finding with Elevators}





\markboth{}{}
\author{Haitong He$^{1,2}$, Xuemian Wu$^1$, Shizhe Zhao$^1$, Zhongqiang Ren$^{1\dagger}$%
\thanks{$^1$ Global College, Shanghai Jiao Tong University, China. This work was supported by the Natural Science Foundation of Shanghai under Grant 24ZR1435900.}
\thanks{$^2$ Harbin Institute of Technology, China.}
\thanks{$\dagger$ Correspondence: zhongqiang.ren@sjtu.edu.cn}
}
\maketitle


\begin{abstract}
\input{abstract}
\end{abstract}



\section{Introduction}
\label{sec:intro}
\input{introduction}


\section{Problem Formulation}\label{cbssShape:sec:problem}
\input{problem_def}

\section{Preliminaries}
\label{sec:preli}
\input{preliminary}

\section{Method}
\label{sec:method}
\input{method}

\section{Experimental Results}
\label{sec:result}
\input{results}

\section{Conclusions and Future Work}
\label{sec:conclusion}
\input{conclusion}

\bibliographystyle{IEEEtran}
\bibliography{references}



\end{document}

%% file: abstract.tex
This paper investigates Multi-Agent Path Finding with Elevators (MAPF-E), which seeks conflict-free paths for multiple agents whose start and goal locations may be on different floors, and the agents can use elevators to travel between floors.
The existence of elevators complicates the interaction among the agents and introduces new planning challenges.
On the one hand, elevators can cause many conflicts among agents due to their relatively long traversal time across floors, especially when many agents need to reach a different floor.
On the other hand, the planner has to reason in a larger state space that includes elevator states in addition to agent locations.
To address these challenges, this paper proposes CBS-E, an extension of Conflict-Based Search (CBS) to solve MAPF-E optimally.
CBS-E introduces new concepts of elevator constraints to incorporate the state of the elevator into the conflict resolution process of CBS.
We also extend Multi-Valued Decision Diagrams (MDDs) to CBS-E, referred to as MDD-E. 
MDD-E enables the intelligent selection of conflicts to resolve in each iteration, thereby improving the runtime efficiency.
The results show that our CBS-E often doubles or triples the success rates of the baseline, and the proposed conflict reasoning can reduce the number of iterations of CBS-E by up to an order of magnitude.


%% file: introduction.tex
Multi-Agent Path Finding (MAPF) seeks conflict-free paths for multiple agents from their respective starts to goals within a shared environment, while minimizing the path cost.
This paper considers a variant of MAPF where the environment consists of multiple floors connected by elevators, and each elevator can transport one agent at a time.
The start and goal locations of the agents may lie on different floors, and the agents use elevators to move between floors.
We refer to this variant as Multi-Agent Path Finding with Elevators (MAPF-E).
When there is only one floor, MAPF-E becomes the regular MAPF.
MAPF is NP-hard~\cite{yu2013structure_nphard} and so is MAPF-E.
MAPF-E naturally arises in logistics when the environment consists of multiple floors.
Consider a 3D warehouse, where mobile robots must use elevators to reach goal locations for retrieving or placing items (Fig.~\ref{fig:background1}).


\begin{figure}[tb]
    \centering
    \includegraphics[width=1\linewidth]{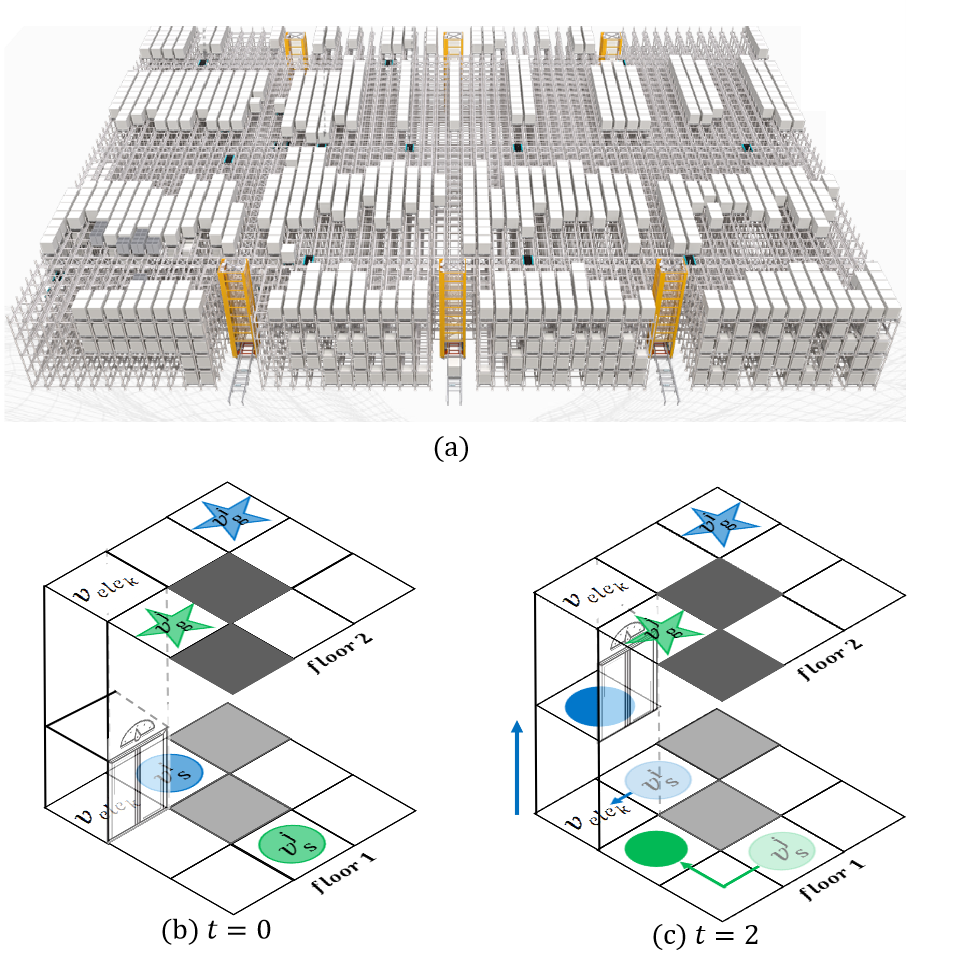}
        \vspace{-5mm}
    \caption{(a) The motivation of MAPF-E. Robots move in a 3D warehouse, while inter-layer transitions are enabled by the elevators (the orange structures). The picture is from ZS Robotics~\cite{zsrobotics2025}.
    (b) A toy example of MAPF-E, which seeks start-goal paths for the agents. All elevator edge costs are 2, and the black squares represent obstacles. (c) At time $t=2$, agent $i$ is using the elevator, and agent $j$ needs to wait at a vertex outside the elevator. 
    }
    \vspace{-3mm}
     \label{fig:background1}
\end{figure}



Although a variety of planners and CBS enhancements for MAPF and its variants have been developed, ranging from optimal planners~\cite{sharon2015conflict,wagner2015subdimensional,boyarski2015icbs}, bounded sub-optimal planners~\cite{barer2014suboptimal,li2021eecbs} to unbounded sub-optimal planners~\cite{okumura2022priority,de2013push}, we are not aware of any research focusing on MAPF-E.
A recent work~\cite{Wang20243DWarehouse} considers MAPF in a 3D space in which agents can move vertically at arbitrary locations, which is different from the elevator-based setting considered in this work.

Existing approaches for MAPF can be adapted to solve MAPF-E, by describing all floors as a graph with special edges representing the elevator between floors and running a MAPF planner on this graph.
However, such a naive adaptation can lead to poor runtime efficiency.
The existence of elevators complicates the interaction among the agents and introduces new challenges for planning.
First, an elevator can couple multiple agents on different floors, even though these agents are spatially far away from each other, thereby complicating the collision resolution process.
Second, it often takes much longer for an elevator to traverse between adjacent floors than for an agent to move from one location to another on the same floor, which can lead to frequent collisions and congestion among the agents, especially when many agents seek to take the same elevator.
In this case, the planner has to consider the possibility of letting an agent take a long detour to use another elevator to optimize the arrival time.
Consequently, the planner must reason over a larger state space that explicitly incorporates elevator states and positions, increasing the computational burden.

As a first attempt to study MAPF-E, this paper focuses on developing exact algorithms that can find optimal solutions for MAPF-E.
Based on Conflict-Based Search (CBS)~\cite{sharon_conflict-based_2015}, a popular approach for MAPF, we develop CBS-E and its variants to optimally solve MAPF-E.
In particular, we find that, a naive adaptation of CBS for MAPF-E may need an exponentially growing number of iterations to resolve agent-agent conflicts as the traversal duration of the elevator per layer increases.
To address this difficulty, our CBS-E can often resolve an elevator conflict in a single iteration, by introducing a new approach for elevator constraints generation based on the state of the elevator.
We prove that, with these new constraints, our CBS-E preserves the solution optimality.
Additionally, we also develop new conflict reasoning techniques by extending Multi-Valued Decision Diagram (MDD), an important data structure for conflict reasoning in CBS~\cite{sharon2012meta}, to MDD with elevator (MDD-E), enabling CBS-E to intelligently select conflicts to resolve in each iteration and further enhance runtime efficiency.

We conduct an ablation study of the proposed conflict reasoning techniques in CBS-E, and compare our CBS-E against a naive adaptation of CBS to MAPF-E.
Experimental results show that CBS-E often doubles or triples the success rate of the naive CBS adaptation in various maps, and the proposed conflict reasoning techniques can reduce the number of iterations of CBS-E by up to an order of magnitude.



%% file: problem_def.tex

Let the index set \( I = \{1, 2, \dots, N\} \) represent a set of $N$ agents.
All agents operate within a shared workspace, modeled as a finite graph \( G = (V, E) \), which consists of $m$ subgraphs $G=G_1\cup G_2\cup \cdots \cup G_m$.
Each subgraph $G_l,l\in L=\{1,2,\cdots,m\}$ represents the $l$-th floor (or layer).
The subgraphs $G_l$ may differ from one another.
The vertex set $V$ represents all possible locations of the agents.
The edge set $E \subseteq V\times V$ represents the transitions between vertices in $V$.
Each edge $e$ between $u,v \in V$  has a positive integer cost value $c(e)=c(u,v)\in \mathbb{Z}^+$, indicating the traversal time of any agent along that edge.

The vertices are classified into two categories $V=V_{ele} \cup V_{reg}$, where $V_{ele}$ denotes the set of vertices corresponding to elevators, and $V_{reg}$ are all other regular vertices.
The edges are also classified into two categories $E=E_{ele}\cup E_{reg}$ as follows.
An edge $e=(u,v)\in E_{ele}$ is an elevator edge if both ends of $e$ are elevator vertices $u,v \in V_{ele}$.
Otherwise, an edge is a regular edge.
Let the index set $I_K=\{1,2,\cdots,K\}$ denote a set of $K$ elevators.
Each elevator $k\in I_K$ corresponds to a set of vertices $V_{ele,k}\subset V_{ele}$ and a set of edges $E_{ele,k} \subseteq E_{ele}$.
Each vertex $v_{k,l} \in V_{ele,k}$ corresponds to a layer $G_l$ and any two vertices $v_{k,l_1},v_{k,l_2} \in V_{ele,k}$ cannot share the same layer $l_1\neq l_2$.
Each edge $e \in E_{ele,k}$ connects two adjacent layers.
Each regular edge $e\in E_{reg}$ has a unit-cost value $c(e)=1$, while each elevator edge can have various known and fixed integer costs.
An elevator can hold at most one agent at a time, and each agent enters an elevator at most once\footnote{This implies every agent can take an elevator to directly reach its goal floor, without taking any elevator for a second time.}.

Let $v_s^i, v_g^i \in V$ denote the start and goal vertices of agent $i \in I $, and $l_s^i, l_g^i$ denote the floors on which $v_s^i, v_g^i$ are located, respectively.
All agents share a global clock and start moving along their paths from $v_s^i$ at $t=0$. Let $\pi^i(v^i_s,v^i_g)=(v_s^i,v_1^i,v_2^i, \dots, v_g^i)$ denote a path for agent $i$ between vertices $v_s^i$ and $v_g^i$.
The time steps $t(v^i_1),t(v^i_2)$ between two adjacent elevator vertices $v^i_{1}$ and $v^i_{2}$ in $\pi^i$ are equal to the elevator edge cost, i.e., $|t(v^i_2)-t(v^i_1)| = c(e), e=(v^i_{1},v^i_{2})\in E_{ele}$.
For all other adjacent vertices \(v_1^i, v_2^i\) in \(\pi^i\), the time steps are one time unit, i.e., the cost of a regular edge $c(v^i_1,v^i_2)$.
Let $g(\pi^i(v^i_s,v^i_g))$ denote the cost of a single path, which is defined as the sum of costs of the edges along the path.
Let $\pi = (\pi^{1}, \pi^{2}, \cdots, \pi^{N})$ denote a joint path of all the agents, where $\pi^{i}$ is the path of agent $i$. The total cost of the joint path is defined as the sum of the individual costs, i.e., $g(\pi) = \sum_{i} g(\pi^{i})$.

A conflict between two agents $i,j\in I$ occurs if any of the following three cases arises. 
The first two cases are the same as in MAPF and the third case is new in MAPF-E:
(1) A vertex conflict $(i,j,v,t)$ occurs when two agents $i,j\in I$ occupy the same vertex $v$ at the same time step $t$; (2) An edge conflict $(i,j,e,t)$ occurs when two agents $i,j\in I$ go through the same edge $e$ from opposite directions between times $t$ and $t+1$; (3) An \emph{elevator conflict}, as defined below.

\begin{definition}[Elevator Conflict]\label{def:ele-conf}
For an agent $i$, let $t^i_s, t^i_g$ denote the time steps when $i$ enters and leaves an elevator $k\in I_K$ to travel from $v_{k,l_1}$ to $v_{k,l_2}$ (i.e., $v_{k,l_1} \;\text{and}\; v_{k,l_2}\in V_{ele,k}$), respectively.
Consider another agent $j$ who travels from $v_{k,l_3} \in V_{ele,k}$ after $i$ using the same elevator.
We identify three cases for $j$ concerning elevator $k$: 
\begin{itemize}
	\item The elevator is \emph{occupied} during the interval $[t^i_s, t^i_g]$ because it is transporting agent $i$;
	\item The elevator is \emph{resetting} during the interval $[t^i_g, t^i_g+\triangle]$, where $\triangle=c((v_{k,l_2}, v_{k,l_3}))$, 
		since $k$ needs to move from $v_{k,l_2}$ to $v_{k,l_3}$ before picking up agent $j$;
	\item The elevator is \emph{available} after $t^i_g+\triangle$ because it has moved to the required floor after transporting $i$; 
\end{itemize}
An elevator conflict occurs when agent $j$ arrives at any vertex in the set $V_{ele,k}$ related to the elevator $k\in I_K$ while that elevator is \emph{resetting} or \emph{occupied} (Fig.~\ref{fig:elevatorcon}). In this work, we assume all elevators are \emph{available} to all agents at $t=0$.
\end{definition}

The goal of MAPF-E is to find a conflict-free joint path $\pi$ with the minimal cost.

\begin{figure}[tb]
    \centering
    \includegraphics[width=1\linewidth]{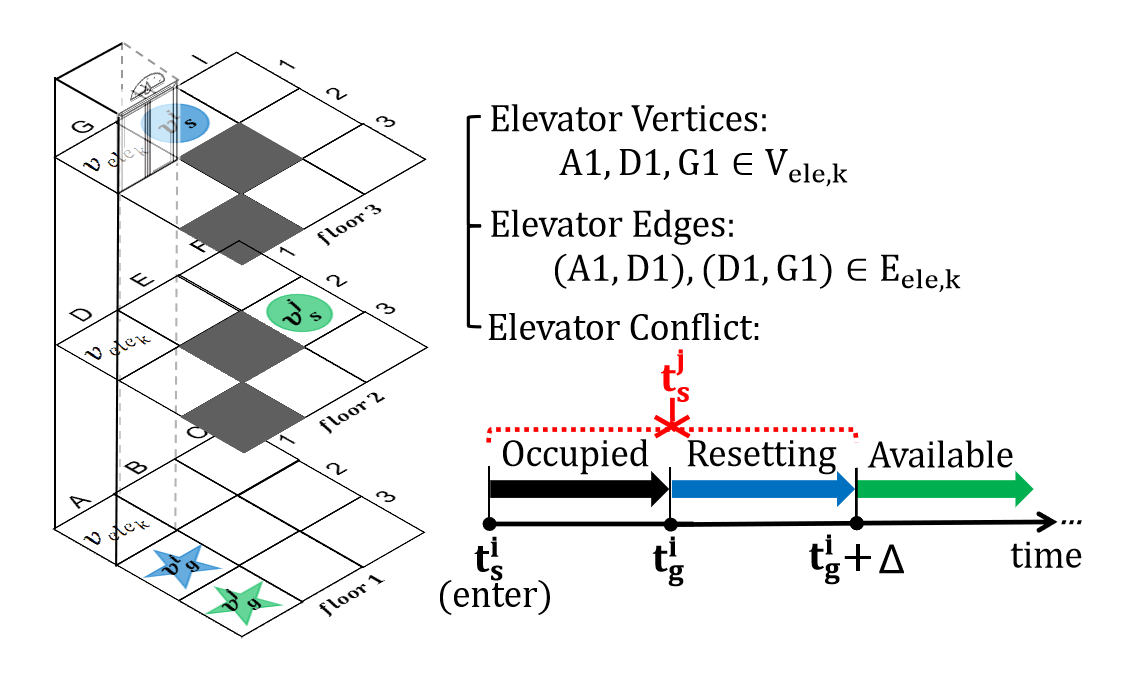}
        \vspace{-5mm}
            \caption{Agent $i$  (blue) and $j$ (green) need to use the same elevator $k$ to reach their goals. An elevator conflict arises when agent $j$ attempts to enter the elevator with the \textit{occupied} or \textit{resetting} state, i.e., when $t_s^j \in [t_s^i,\, t_g^i + \triangle]$.}
     \vspace{-3mm}
     \label{fig:elevatorcon}
\end{figure}


%% file: preliminary.tex
\newcommand{\procName}[1]{\textsc{#1}}
\subsection{Conflict-Based Search}
Conflict-Based Search (CBS)~\cite{sharon_conflict-based_2015} is a two-level search algorithm that finds an optimal joint path for all agents for MAPF.
At the high-level, CBS searches by building a binary tree, called the \emph{constraint tree} (CT).
Each node in the CT represents a set of constraints $\Omega$, a joint path $\pi$ satisfying the constraints in $\Omega$, and its $g$-value, the cost of $\pi$.

CBS begins by constructing a root node $R_{\text{root}} = (\pi_0, g_0, \Omega_0)$, where $\pi_0 = (\pi^1, \pi^2, \dots, \pi^N)$ is the shortest path for each agent ignoring any agent-agent conflicts and $\Omega_0=\emptyset$. 
At any time during the search, all leaf nodes of the CT form an open list, denoted as OPEN, which is a priority queue sorted in ascending order of $g$-value.
CBS repeatedly selects the node with the smallest $g$-value from OPEN for expansion.
CBS expands a node $P = (\pi, g, \Omega)$ by checking for a conflict among the agent paths in $\pi$.
If no conflict is detected, the current joint path $\pi$ is returned as an optimal solution.
Otherwise, for a detected conflict $C = (i,j,v,t)$, two new constraints, $(i,v,t)$ and $(j,v,t)$, are generated.
Each constraint leads to a new constraint set, $\Omega \cup \{(i,v,t)\}$ and $\Omega \cup \{(j,v,t)\}$, respectively (the same applies to edge conflicts $(i,j,e,t)$).
For each new constraint set $\Omega'$, a low-level search is performed to compute a new optimal path for the affected agent $i$ that satisfies all the constraints in $\Omega'$. 
The new joint path $\pi'$ is formed by replacing the old path of agent $i$ while keeping the paths of the other agents unchanged.
A corresponding node $P' = (\pi', g', \Omega')$ is created and added to OPEN for future expansion.
The search continues until a node with conflict-free joint paths is found.
CBS guarantees that the returned solution has the minimum cost.

\subsection{MDD, Bypassing Conflicts and Conflict Selection}
At the high-level of CBS, for a selected node $P=(\pi,g,\Omega)$ from OPEN, there may be multiple conflicts in $\pi$. Both the choice of which conflict to resolve and the method used to select an appropriate resolution can significantly affect the runtime efficiency of CBS~\cite{boyarski2015icbs}. To optimize this process, we employ two techniques, namely \emph{Bypassing Conflict} and \emph{Conflict Selection}, both of which are based on Multi-Valued Decision Diagrams (MDD)~\cite{guni2013ict}.

\subsubsection{Multi-Valued Decision Diagrams (MDD)}

The MDD$^i_d$ for agent \(i\) in a CT node \(P\) is defined as a directed graph representing all optimal paths (each with cost \(d\)) that are subject to the set of constraints $\Omega$ of $P$.
 Each node in the MDD corresponds to a tuple $(v,t)$, where $v$ is a vertex reached by agent $i$ at time $t$ on an optimal path, and it appears at time $t$ in MDD$^i_d$.
When an MDD contains only a single node $(v,t)$ at time $t$, and a new constraint prohibits the agent from occupying $v$ at that time, all optimal paths (with cost $d$) are eliminated.
The agent with this new constraint must take a new path with path cost $d+1$, thereby increasing the path cost.
This property allows MDDs to classify a conflict by checking whether a constraint eliminates all paths in MDD$^i_d$.
Specifically, this check is performed by constructing the joint MDD$^{ij}$ for agents $i$ and $j$. 
A node in the joint MDD$^{ij}$ is represented as $((v_i,t), (v_j,t))$, indicating that there exists a pair of conflict-free paths for agents $i$ and $j$, with the respective optimal paths passing through $(v_i,t)$ and $(v_j,t)$.
Fig.~\ref{fig:traMDD} shows an example.


\subsubsection{Bypassing Conflicts (BP)}

 For a popped node $P=(\pi,g,\Omega)$ containing a conflict $C$ in $\pi$, sometimes it is beneficial to directly modify the joint path $\pi$ in $P$ to avoid branching that would generate two new nodes and increase the size of the CT.
This technique is known as bypass conflicts (BP).
Specifically, a path $\pi^{i}_{new}$ is a \textit{valid bypass} to path $\pi^{i}$ for agent $i$ with respect to a conflict $C$ and the CT node $P$, if the following conditions are satisfied: (i) $g(\pi^{i}_{new})=g(\pi^{i})$, (ii) both $\pi^{i}_{new}$ and $\pi^{i}$ satisfy the constraints $\Omega$ of $P$, (iii) $\pi^{i}_{new}$ does not violate the conflict $C$.
Since the joint MDD$^{ij}$ contains all optimal conflict-free paths between agents $i$ and $j$, $\pi^{i}_{new}$ can be extracted from the joint MDD$^{ij}$ if it exists. Then, $P$ adopts $\pi^{i}_{new}$ directly and is added back to OPEN.

\subsubsection{Conflict Selection}
All conflicts can be classified into three types~\cite{boyarski2015icbs}:
(1) \textit{Cardinal conflict}: A conflict $C = (i,j,v,t)$ is cardinal for a CT node $P$ if neither agent $i$ nor agent $j$ can find a valid bypass in the joint MDD$^{ij}$. This implies that each of the two constraints derived from $C$ increases the cost $g$ of the CT node $P$.
(2) \textit{Semi-cardinal conflict}: A conflict $C$ is semi-cardinal if only one of the agents, either $i$ or $j$, can find a valid bypass in the joint MDD$^{ij}$. This implies that only one of the constraints derived from $C$ increases the cost $g$ of the CT node $P$.
(3) \textit{Non-cardinal conflict}: A conflict $C$ is non-cardinal if both agents $i$ and $j$ can find a valid bypass in the joint MDD$^{ij}$. This implies that neither of the two constraints derived from $C$ increases the cost $g$ of the CT node $P$.
Since the overall process of CBS aims to increase $g$ toward its final optimal value, we prefer to resolve conflicts that cause $g$ to increase more rapidly.
Therefore, CBS often selects and resolves conflicts in the following order~\cite{boyarski2015icbs}: {cardinal conflicts}, {semi-cardinal conflicts}, and then {non-cardinal conflicts}.

\begin{figure}[tb]
    \centering
    \includegraphics[width=\linewidth]{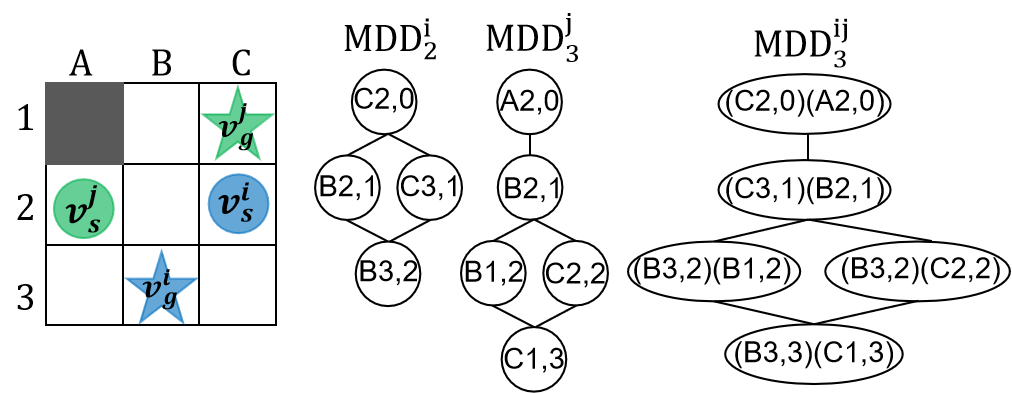}
        \vspace{-5mm}
    \caption{Each node $(v,t)$ in the MDD means the agent can optimally reach the goal via vertex $v$ at time $t$. For instance, in $MDD^i_2$, at $t=1$, the agent can reach either $B2$ or $C1$, resulting in two nodes at this layer. In the joint MDD, each node must correspond to a conflict-free partial path. For instance, in $MDD^{ij}_3$ at $t=1$, a conflict would occur if agents $i$ and $j$ both visited $B2$ simultaneously. So the joint MDD at $t=1$ contains only the node $((C3,1),(B2,1))$.
}
    \vspace{-3mm}
     \label{fig:traMDD}
\end{figure}

\begin{algorithm}[tb]
    \small
	\caption{CBS-E}
    \label{alg:newCBS}
	\begin{algorithmic}[1]
        \small
            \Statex{INPUT: $G=(V,E,C)$}
            \Statex{OUTPUT: a conflict-free joint path $\pi$ in $G$}
		\State{$\Omega_0 \gets \emptyset$, $\pi_0 \gets$ \procName{LowLevel}($i,\Omega_{}$)}, $\forall i\in I$, $g_0 \gets g(\pi_0)$
		\State{Add $R_{root}=(\pi_0,g_0,\Omega_0)$ to OPEN}\label{cbssapx:alg:cbssa:lineInitEnd}
		\While{OPEN $\neq \emptyset$} 
         \State $P=(\pi,g,\Omega) \gets$ best node from \textsc{OPEN}
    \If{$\pi$ has no conflict}
        \State \Return $\pi$ 
    \EndIf
    \State \underline{$C \gets$ \procName{FindConflict}($\pi$)}\label{cbssapx:alg:cbssa:MDDE}
    \If{$C$ is not cardinal}
        \If{\procName{FindBypass}($P$, $C$)} 
            \State \textbf{continue}
        \EndIf
    \EndIf
    \ForAll{agent $i$ in $C$}
    \State \underline{$\Omega' \gets \Omega \cup \procName{NewConstraints}(i,C)$}\label{cbssapx:alg:cbssa:EC}
    \State{$\pi'_{},g'_{} \gets$ \procName{LowLevel}($i,\Omega'_{}$)}
    	\State{Add $P'=(\pi',g',\Omega')$ to OPEN}
		
    \EndFor
		\EndWhile \label{}
		\State{\textbf{return} failure}
	\end{algorithmic}
\end{algorithm}

%% file: method.tex
CBS-E (Alg.~\ref{alg:newCBS}) follows a process similar to that of CBS, with the primary differences occurring on Line~\ref{cbssapx:alg:cbssa:MDDE} and~\ref{cbssapx:alg:cbssa:EC}, which represent our contributions.
On Line~\ref{cbssapx:alg:cbssa:MDDE}, we introduce MDD-E to classify and select conflicts.
On Line~\ref{cbssapx:alg:cbssa:EC}, we introduce a new type of elevator constraint to efficiently handle elevator conflicts. 

\subsection{Elevator Constraints (EC)}

The conventional conflict resolution technique in CBS can also be applied to resolve elevator conflicts.
For two agents, \(i\) and \(j\), that are in conflict, each iteration of CBS forbids either agent \(i\) or \(j\) from occupying a vertex at a specific time step \(t\).
However, this approach is inefficient for resolving an elevator conflict since many time steps must be forbidden for an agent.
Fig.~\ref{fig:Inefficiency}(b) provides such a toy example, where a large number of CT nodes are generated when resolving the elevator conflict using conventional CBS.
By contrast, the same elevator conflict can be resolved in a single iteration (Fig.~\ref{fig:Inefficiency}(c)) using our new method as explained below.

\begin{figure*}[tb]
    \centering
    \includegraphics[width=0.9\linewidth]{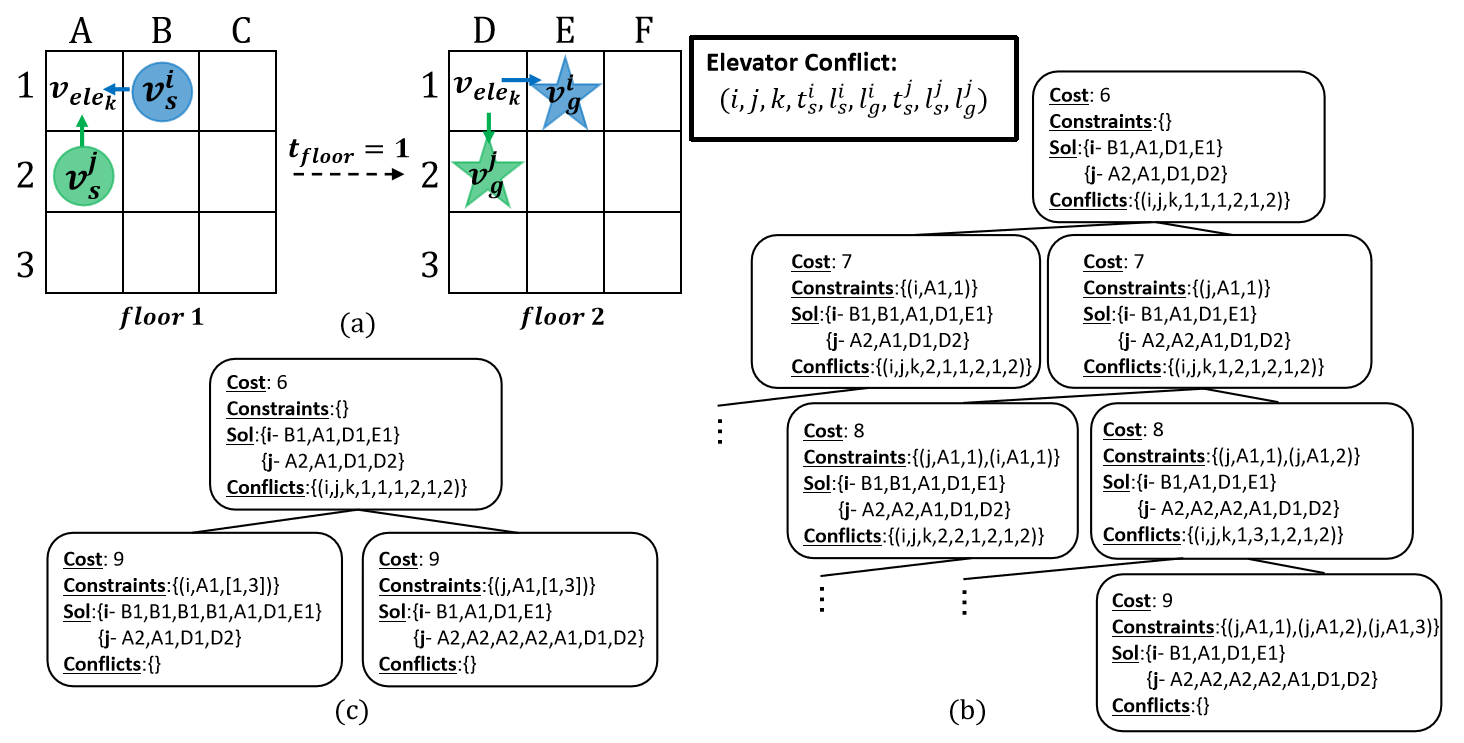}
        \vspace{-5mm}
    \caption{(a) A toy example of MAPF-E, which seeks start-goal paths for the agents. The colors indicate the agents. The traversal time of any elevator edge is $t_{floor} = 1$.
(b) Inefficiency of resolving the elevator conflict by single-timestep constraint in CBS, which requires expanding the CT at least \(\min\{(t^j_s+t^j_o+t^j_r-t^i_s+1),\ (t^i_s+t^i_o+t^i_r-t^j_s+1)\}\) times. (c) With EC, the elevator conflict can be resolved with a single expansion.
}

     \label{fig:Inefficiency}
\end{figure*}

Let \((i,j,k,t_s^i,l_s^i,l_g^i,t_s^j,l_s^j,l_g^j)\) denote an elevator conflict, indicating that agents \(i\) and \(j\) encounter a conflict involving elevator \(k\).
Here, agent $i$, with start and goal vertices located at floor $l_s^i,l_g^i$ respectively, starts to use elevator $k$ at time $t^i_s$, and agent $j$, with start and goal vertices located at floor $l_s^j,l_g^j$ respectively, starts to use elevator $k$ at time $t^j_s$.

To simplify the presentation, we assume that the time required for an elevator to traverse any elevator edge between two adjacent layers is constant. Let $t_{floor} = c(e), e \in E_{\text{ele},k}$ denote the cost of any elevator edge.
Then, the duration for an elevator to transport agent $i$ from layer $l_s^i$ to $l_g^i$ is $t^i_o = {|l_s^i - l_g^i| \cdot {t_{floor}}}$. 
Before the elevator can serve another agent $j\neq i$, the elevator must move to the start floor of agent $j$ incurring a resetting cost of $t^i_r = |l_g^i - l_s^j| \cdot {t_{floor}}$.
Similarly, the corresponding parameters for agent $j$, such as $t^j_s$ (the time when agent $j$ enters this elevator), $t^j_o$ (duration that the elevator is occupied by $j$ during transportation) and $t^j_r$ (time for the elevator to return to agent $i$'s floor after transporting agent $j$) can be computed.
Thus, due to agent $i$'s usage of the elevator, the time interval during which the elevator $k$ cannot be used by agent $j$ is at least ${IN}_i$(\ref{CBS-E:eqn:inter1}) and ${IN}_j$(\ref{CBS-E:eqn:inter2}) can be similarly defined.   
\begin{align}
 {IN}_i = [t^i_s, \, t^i_s + t^i_o + t^i_r] \label{CBS-E:eqn:inter1}\\ 
  {IN}_j = [t^j_s, \, t^j_s + t^j_o + t^j_r]  \label{CBS-E:eqn:inter2}
\end{align}
When an elevator conflict is first detected,
without loss of generality, we consider the case where $t^i_s <  t^j_s$, i.e., agent $i$ starts using elevator $k$ earlier than agent $j$, and that $t^j_s \leq t^i_s+t^i_o+t^i_r$, indicating that the two agents experience an elevator conflict over elevator $k$.

To resolve such an elevator conflict, the following two constraint sets are generated.

\vspace{-5mm}
\begin{align}
  \Omega_i=\{\omega^i=(i,v_{k,l^i_s},t^i)| t^i \in [t^i_s,t^j_s+t^j_o+t^j_r] \} \label{CBS-E:eqn:cstr1}\\ 
  \Omega_j=\{\omega^j=(j,v_{k,l^j_s},t^j)| t^j \in [t^j_s,t^i_s+t^i_o+t^i_r] \},\label{CBS-E:eqn:cstr2}
\end{align}
where $v_{k,l^i_s}$ and $v_{k,l^j_s}$ denote the entrances of elevator $k$ at floors $l^i_s$ and $l^j_s$, respectively.
The first set of constraints prohibits agent~$i$ from entering elevator~$k$ from vertex $v_{k,l^i_s}$ during the time interval $[t^i_s,t^j_s+t^j_o+t^j_r]$.  
The second set of constraints prohibits agent~$j$ from entering elevator~$k$ from vertex $v_{k,l^j_s}$ during the time interval $[t^j_s,t^i_s+t^i_o+t^i_r]$.

As shown in Fig.~\ref{fig:Inefficiency}(c), by using EC, the CT needs to branch only once to resolve the elevator conflict, thereby addressing the long-term coupling between agents $i$ and $j$ caused by the elevator.
In this example, $t^i_s = 1$, $t^j_s=1$, $t^j_o=1$, $t^j_r=1$. As a result, the set of constraints for agent $i$ is $\Omega_i=\{i,A1,[1,3]\}$. Similarly the set of constraints for agent $j$ is $\Omega_j=\{j,A1,[1,3]\}$.

To prove CBS with this new branching rule returns an optimal solution, we need to show the two constraint sets are \emph{mutually disjunctive}~\cite{li2019multi}. Specifically, two sets $\Omega_i$ and $\Omega_j$ are said to be mutually disjunctive if no conflict-free joint path exists that can simultaneously violate any pair of constraint sets $(\omega^i, \omega^j)$ with $\omega^i \in \Omega_i$ and $\omega^j \in \Omega_j$.
In other words, if paths $\pi^{i}$ and $\pi^{j}$ simultaneously violate constraint sets (\ref{CBS-E:eqn:cstr1}) and (\ref{CBS-E:eqn:cstr2}), then there must exist an elevator conflict.

\begin{theorem}
\label{theorem:1}
Constraints (\ref{CBS-E:eqn:cstr1}) and (\ref{CBS-E:eqn:cstr2}) are mutually disjunctive.
\end{theorem}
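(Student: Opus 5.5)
We argue by contradiction and use the definition of mutual disjunctiveness stated just above the theorem. Suppose there is a joint path $\pi$ in which $\pi^i$ violates some $\omega^i=(i,v_{k,l^i_s},t'^i)\in\Omega_i$ and $\pi^j$ violates some $\omega^j=(j,v_{k,l^j_s},t'^j)\in\Omega_j$. We show that $\pi$ must contain an elevator conflict on elevator $k$ in the sense of Definition~\ref{def:ele-conf}. Violating $\omega^i$ means agent $i$ enters elevator $k$ from floor $l^i_s$ at time $t'^i\in[t^i_s,\,t^j_s+t^j_o+t^j_r]$. Violating $\omega^j$ means agent $j$ enters elevator $k$ from floor $l^j_s$ at time $t'^j\in[t^j_s,\,t^i_s+t^i_o+t^i_r]$. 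Since each agent takes an elevator at most once and its start and goal floors are fixed, in $\pi$ agent $i$ rides from $l^i_s$ to $l^i_g$ and agent $j$ rides from $l^j_s$ to $l^j_g$. Hence the occupied durations $t^i_o,t^j_o$ and the reset durations $t^i_r,t^j_r$ are the same quantities as those used to build the constraints.

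The key step is to express the conflict condition as an interval-overlap condition. By Definition~\ref{def:ele-conf}, if $i$ enters first ($t'^i\le t'^j$), the elevator is occupied or resetting with respect to $j$ throughout $[t'^i,\,t'^i+t^i_o+t^i_r]$. So a conflict occurs iff $t'^j\le t'^i+t^i_o+t^i_r$. Symmetrically, if $t'^j\le t'^i$, a conflict occurs iff $t'^i\le t'^j+t^j_o+t^j_r$. Two simultaneous entries ($t'^i=t'^j$) are always a conflict.

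Now split into two cases.

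\textbf{Case $t'^i\le t'^j$.} We have $t'^j\le t^i_s+t^i_o+t^i_r\le t'^i+t^i_o+t^i_r$, using $t'^i\ge t^i_s$. So $j$ arrives while the elevator is occupied or resetting for $i$, which is an elevator conflict.

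\textbf{Case $t'^j\le t'^i$.} We have $t'^i\le t^j_s+t^j_o+t^j_r\le t'^j+t^j_o+t^j_r$, using $t'^j\ge t^j_s$. So $i$ arrives while the elevator is busy with $j$, again a conflict.

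In either case $\pi$ is not conflict-free, so $\Omega_i$ and $\Omega_j$ are mutually disjunctive.

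The main obstacle is justifying that in the hypothetical path $\pi$ the durations $t_o$ and $t_r$ are exactly those computed from the original conflict. This needs the at-most-once elevator assumption, and it needs the entry floors to coincide with $l^i_s,l^j_s$, which holds because the constraints are placed on the specific entrance vertices $v_{k,l^i_s}$ and $v_{k,l^j_s}$. The reset time should also be read per Definition~\ref{def:ele-conf}, with $\triangle$ depending on the later agent's entry floor, which matches $t^i_r$ and $t^j_r$ as defined. If an agent might ride to a floor other than its goal floor, we would instead use the footnote's implication that it rides directly to its goal floor.
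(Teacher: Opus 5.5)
Your proof is correct and is essentially the paper's argument. The paper takes the new entry times $x,y$ and rules out the two non-overlap cases $y > x+t^i_o+t^i_r$ and $x > y+t^j_o+t^j_r$ by contradiction, using exactly your chains $y\le t^i_s+t^i_o+t^i_r\le x+t^i_o+t^i_r$ and $x\le t^j_s+t^j_o+t^j_r\le y+t^j_o+t^j_r$; you instead split directly on which agent enters first. One small point: your claim that $t_o$ and $t_r$ are the same quantities in the new paths is slightly stronger than needed, whereas the paper only says the blocked window is ``at least'' $[x,x+t^i_o+t^i_r]$, which is the safer phrasing, since a longer ride would only enlarge that window.
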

\begin{proof}
Suppose agent~$i$ (entering elevator $k$ from vertex $v_{k,l^i_s}$ at $t^i_s$) and agent~$j$ (entering elevator $k$ from vertex $v_{k,l^j_s}$ at $t^j_s$) encounter their first elevator conflict at elevator $k$, agents $i$ and $j$ are then assigned two constraints $\Omega_i$ and $\Omega_j$. Assume that in the new paths of agents $i$ and $j$, agent~$i$ enters elevator $k$ at time $x$ such that elevator $k$ cannot be used by agent~$j$ for at least the interval $I_i =[x, \, x + t^i_o + t^i_r]$ by Eq.\ref{CBS-E:eqn:inter1}, and agent~$j$ enters elevator $k$ at time $y$ such that elevator $k$ cannot be used by agent~$i$ for at least the interval $I_j=[y, \, y + t^j_o + t^j_r]$ by Eq.\ref{CBS-E:eqn:inter2}.
We need to show that if the new paths of agents $i$ and $j$ violate the constraints $\Omega_i$ and $\Omega_j$, for any pair $(x,y)$ with $x \in [t^i_s,\,t^j_s+t^j_o+t^j_r]$ and $y \in [t^j_s,\,t^i_s+t^i_o+t^i_r]$, the two corresponding intervals $I_i$ and $I_j$ must intersect, which indicates a conflict.
We show the following cases cannot happen by contradiction: 
    case 1: $y > x+t^i_o+t^i_r$, and case 2: $x > y+t^j_o+t^j_r$.
For the case 1, since $x \in [t^i_s,\,t^j_s+t^j_o+t^j_r]$ and $y \in [t^j_s,\,t^i_s+t^i_o+t^i_r]$, we have $
y \leq t^i_s+t^i_o+t^i_r$ and  $x+t^i_o+t^i_r \geq t^i_s+t^i_o+t^i_r$. 
Thus, $y \leq t^i_s+t^i_o+t^i_r \leq x+t^i_o+t^i_r$, which contradicts the assumption $y > x+t^i_o+t^i_r$.
For the case $x > y+t^j_o+t^j_r$, since $x \in [t^i_s,\,t^j_s+t^j_o+t^j_r]$ and $y \in [t^j_s,\,t^i_s+t^i_o+t^i_r]$, we have $x \leq t^j_s+t^j_o+t^j_r$, and $y+t^j_o+t^j_r \geq t^j_s+t^j_o+t^j_r$.
Thus, $x \leq t^j_s+t^j_o+t^j_r \leq y+t^j_o+t^j_r$, which contradicts the assumption $x > y+t^j_o+t^j_r$.
Therefore, if a joint path $\pi$ simultaneously violates any pair of constraints  $(\omega^i,\omega^j), \omega^i\in \Omega_i, \omega^j \in \Omega_j$, $\pi$ must contain elevator conflicts, and constraints~(\ref{CBS-E:eqn:cstr1}) and~(\ref{CBS-E:eqn:cstr2}) are mutually disjunctive. 
\end{proof}

\begin{remark}\label{rmk:corrdor}
The conventional CBS is known to be inefficient when resolving conflict in long corridor~\cite{sharon2015conflict}, a problem that is addressed by \emph{corridor reasoning}~\cite{li_pairwise_2021}.  
The key difference between corridor conflicts and elevator conflicts lies in the agents' behavior: in a feasible solution, agents can wait or avoid other agents by temporarily entering, whereas an elevator can only transport agents between floors and can be unavailable even when no agents are using it (due to resetting).
As a result, \emph{corridor reasoning} is not applicable to elevator conflicts.
\end{remark}

\begin{figure}[tb]   
    \centering
    \includegraphics[width=0.5\textwidth]{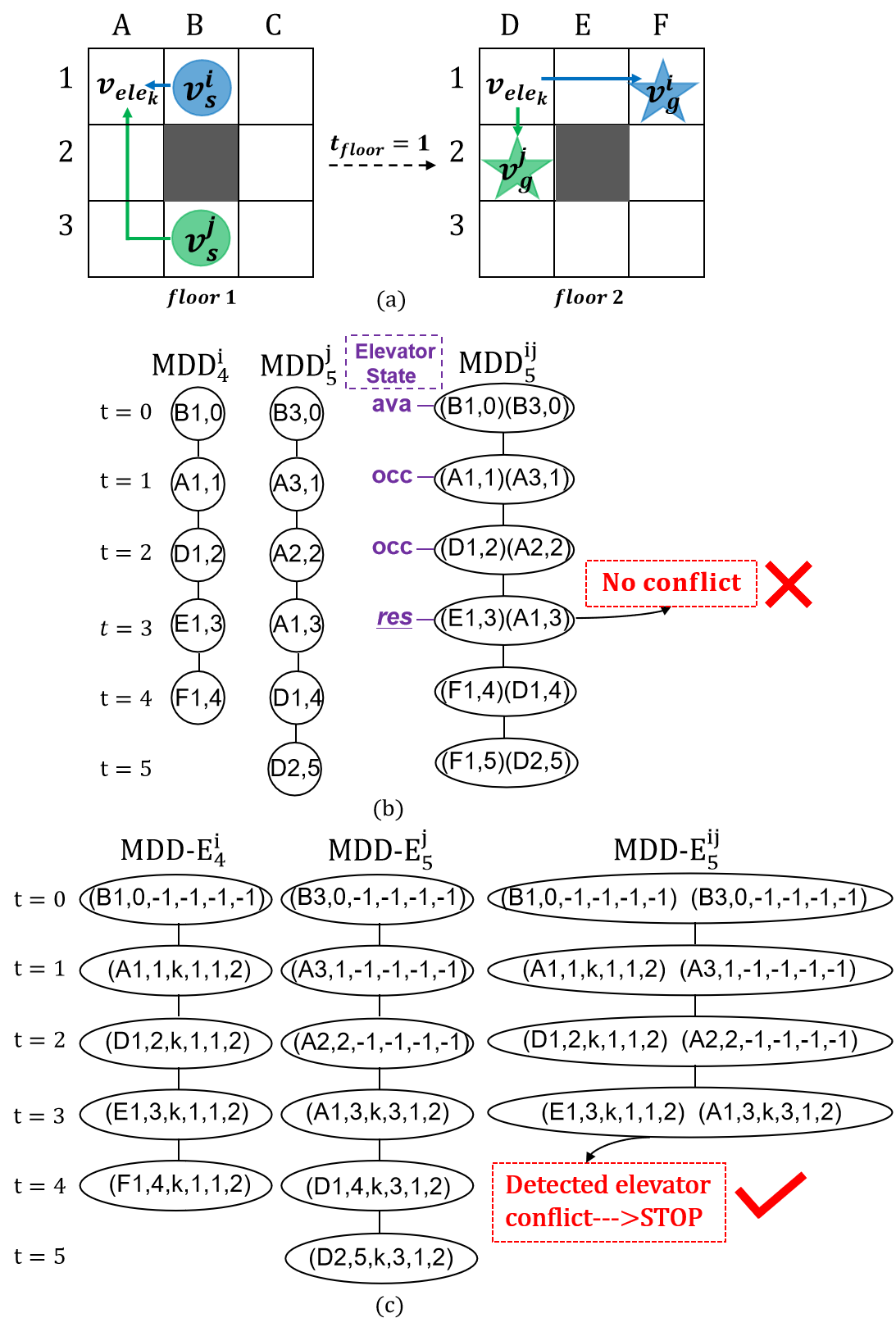} 
    \vspace{-6mm}
    \caption{(a) A toy example of MAPF-E, where agents (green and blue) move from start to goal, black blocks denote obstacles, and each elevator edge has a traversal time of $t_{floor}=1$. (b) shows that when constructing joint MDD by the classical method, an elevator conflict occurring at $t=3$ cannot be detected, because the nodes in the traditional MDD do not contain elevator-related information and cannot capture the elevator state; (c) shows that with MDD-E, the elevator conflict can be successfully detected during the construction of the joint MDD-E.
    }

    \vspace{-3mm}
    \label{fig:classicMDD}
\end{figure}
\subsection{MDD with Elevator Details \texorpdfstring{(MDD-E)}{(MDDE)}}

Multi-Valued Decision Diagrams (MDD)~\cite{sharon2012meta} cannot be directly applied to MAPF-E as it does not consider the state of the elevator and therefore fails to detect elevator-related conflicts.
As illustrated in Fig.~\ref{fig:classicMDD}(b), when constructing the joint MDD using the classical method, no edge or vertex conflict is detected between agent $i$ and $j$ at $t=3$. Consequently, the node $((E1,3),(A1,3))$ is added to $MDD^{ij}_5$. Nevertheless, an elevator conflict actually occurs at $t=3$: the elevator is in the resetting state due to agent~$i$'s usage, while agent~$j$ attempts to use it. Since elevator-related information is not encoded in the nodes, this conflict remains undetected during the construction of $MDD^{ij}_5$. We therefore modify MDD to incorporate elevator states.

Let $(v,t,k,t^i_s,l_s^i,l_g^i)$ denote an MDD-E node, where agent $i$ has start and goal vertices located on layers $l_s^i$ and $l_g^i$, respectively, and enters elevator $k$ at time $t^i_s$. Let $k,t^i_s,l^i_s,l^i_g=-1$ to indicate that agent~$i$ has not yet used any elevator. Each node at time $t$ in {MDD-E}$^{i}_d$ corresponds to a possible location of agent $i$ at time $t$ along a path of cost $d$ from $v_s^i$ to $v_g^i$.
Edges between nodes in {MDD-E}$^{i}_d$ are directed, representing feasible transitions where agent $i$ can move from one node to another in the next time step. 

When constructing the joint {MDD-E}$^{ij}_d$ for agents \(i\) and \(j\),  
if the corresponding nodes in {MDD-E}$^{i}_d$ and {MDD-E}$^{j}_d$ have the same \(k\) value of elevator and \(k \neq -1\), then the other information in the nodes is used to determine whether an elevator conflict occurs, according to the detection method described in Sec.~\ref{cbssShape:sec:problem} (Example~\ref{eg:MDD-E}).  
Specifically, an elevator conflict exists if either (\ref{CBS-E:eqn:con1}) or (\ref{CBS-E:eqn:con2}) holds:
\begin{align}
  t^i_s &\in [t^j_s,\, t^j_s + t^j_o + t^j_r] \label{CBS-E:eqn:con1}\\
  t^j_s &\in [t^i_s,\, t^i_s + t^i_o + t^i_r] \label{CBS-E:eqn:con2}
\end{align}
By identifying the types of elevator conflicts using MDD-E and integrating Conflict Selection and Bypass techniques at the high-level, the CBS with MDD-E can compute a lower bound on the joint path cost that is at least as tight as that of CBS without MDD-E, given the same number of CT nodes.
\begin{example}\label{eg:MDD-E}
Using our MDD-E, each node is augmented with elevator-related information, allowing the joint MDD-E to correctly detect elevator conflicts. As shown in Fig.~\ref{fig:classicMDD}(c), node $(E1,3,k,1,1,2)$ of {MDD-E}$^{i}_4$ encodes that agent~$i$ starts using elevator $k$ at $t=1$, traveling from floor~1 to floor~2. Here, elevator $k$ is still in the resetting state at $t=3$.
Since agent~$j$ is not allowed to use an elevator that is in the resetting state, the elevator conflict can be correctly detected during the construction of the joint MDD-E. 
\end{example}

%% file: results.tex



\begin{figure}[tb]
    \centering
    \includegraphics[width=1\linewidth]{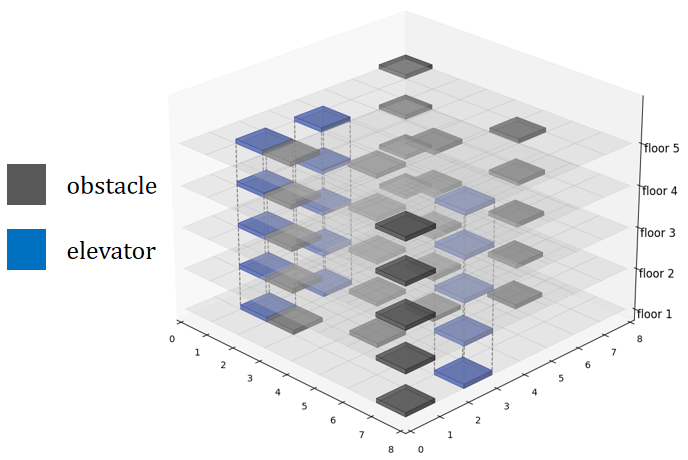}
        \vspace{-5mm}
    \caption{ A map with 5 floors connected by 3 elevators, where each floor is the \texttt{random-8-8-10} map.
      }
      \vspace{-3mm} 
     \label{fig:maps}
\end{figure}

\begin{figure*}[tb] 
    \centering
    \includegraphics[width=0.9\textwidth]{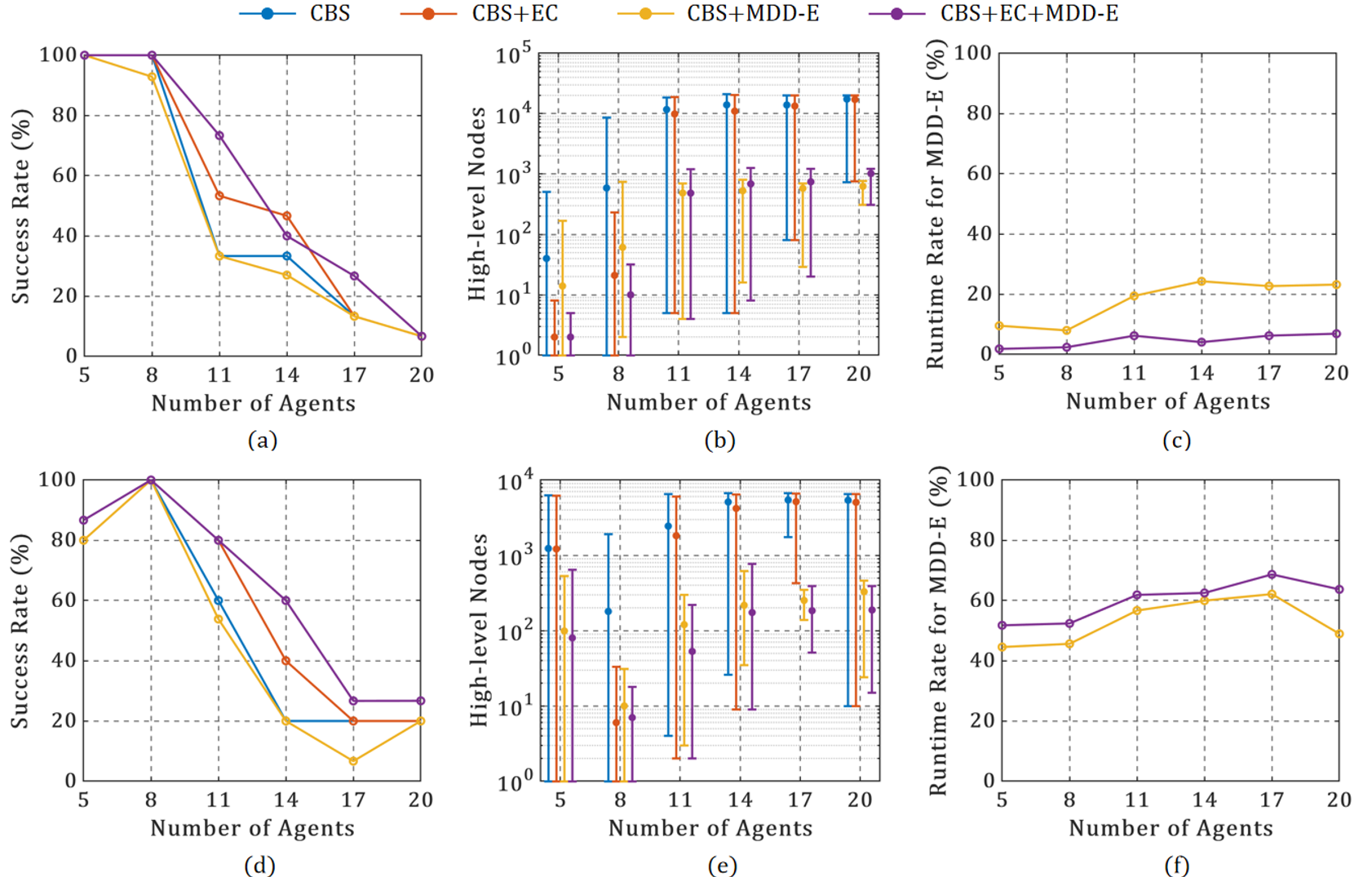} 
    \caption{(a),(d): the success rates of the algorithms when the number of agents $N \in \{5,8,11,14,17,20\}$ on \texttt{random-8-8-10} and \texttt{random-16-16-10} maps respectively.
     (b),(e): the minimum, average and maximum number of high-level nodes expanded over all instances with varying numbers of agents $N$.
    (c),(f): the fractions of total search time spent on MDD-E and related techniques in CBS+MDD-E and CBS+EC+MDD-E within the time limit over all instances with varying numbers of agents $N$.
 }
    \vspace{-2mm}
    \label{fig:resultall}
\end{figure*}
We use two 4-neighbor grids of sizes 8$\times$8 and 16$\times$16
as the single-floor maps, both with $10\%$ randomly generated obstacles, and vary the number of layers in our experiments.\footnote{
Note that, the overall scale of the workspace quickly grows as the number of layers increases.
For example, when each floor is represented by a relatively small map of size $8 \times 8$ with $64$ vertices, the resulting workspace graph contains $320$ vertices when $5$ floors are considered.}
Each floor shares the same grid map, and all the floors are connected by a certain number of elevators.
As shown in Fig.~\ref{fig:maps}, a multi-floor scenario is generated based on maps of size $8 \times 8$.
For all the algorithms, we adopt SIPP~\cite{phillips2011sipp} as the low-level planner.
All tests are on a laptop with Intel i7-13700 CPU with 32GB RAM.

\subsection{Experiment 1: Varying Types of Map}

Our first experiment considers two floors connected by 3 elevators, with \(t_{floor} = 3\).
For each map, we run 15 instances with varying numbers of agents \(N\), and set a runtime limit of 60 seconds for each instance.
In each instance, the start and goal positions of the agents, as well as the locations of the 3 elevators, are randomly generated, ensuring that a valid solution exists.
In these experiments, we compare EC and MDD-E against a naive adaptation of CBS to MAPF-E (denoted as CBS).
\subsubsection{Success Rates}
Fig.~\ref{fig:resultall}~(a), (d) show the success rates (i.e., the rate of instances solved by each algorithm within 60 seconds) for the algorithms.
Incorporating EC into the algorithms often improves the success rate within the same time limit.
The introduction of MDD-E alone brings limited improvement and may even perform worse than CBS when $N \geq 11$. This is because, as the number of agents increases, conflicts become more frequent, and the computational overhead of MDD-E in the conflict selection process restricts the number of conflicts that can be resolved within the time limit.
The computational overhead of MDD-E is often larger for agents with long paths, which is often the case since the agents take the elevator to move across multiple floors.

MDD-E demonstrates its advantages when combined with more efficient constraint-handling techniques (e.g., EC). 
For instance, on the \texttt{random-16-16-10} map, CBS+EC+MDD-E achieves a higher success rate than CBS+EC. This is because, with EC efficiently resolving conflicts, MDD-E can further guide the conflict selection process by classifying conflicts, thereby improving the success rate. 
However, on the \texttt{random-8-8-10} map, this improvement is less pronounced. In particular, when $N=14$, the success rate of CBS+EC+MDD-E is lower than that of CBS+EC in some cases. This is because, with the same number of agents, conflicts are denser on the smaller map and the additional overhead of MDD-E leads to more timeouts. Furthermore, compared with CBS+MDD-E, CBS+EC+MDD-E achieves higher success rates due to EC. In fact, on both maps, CBS+EC+MDD-E achieves a success rate that is up to around 40\% higher than CBS+MDD-E.


\subsubsection{Number of Expansions}

Fig.~\ref{fig:resultall}~(b), (e) show the average number of resolved conflicts (i.e., expanded high-level nodes).
Compared with CBS, CBS+EC resolves elevator conflicts more efficiently, leading to fewer average CT nodes and a higher success rate, as observed in both the \texttt{random-8-8-10} and \texttt{random-16-16-10} maps.
Furthermore, using MDD-E with Conflict Selection and BP techniques at the high level reduces expansions by prioritizing more critical conflicts. The CBS+EC+MDD-E algorithm can solve the same instances with fewer high-level node expansions compared to the CBS algorithm. For example, on the \texttt{random-16-16-10} map with $N \geq 11$, the number of expanded CT nodes in CBS+EC+MDD-E is much lower than that in CBS, and its success rate is also higher.

\subsubsection{\texorpdfstring{MDD-E}{MDDE} Runtime Overhead}
Fig.~\ref{fig:resultall}~(c), (f) show the fraction of total search time spent on MDD-E and related techniques within the 60-second runtime limit.
In both maps, as $N$ increases, more conflicts require repeated construction and querying of MDD-E and joint MDD-E for conflict classification, thus increasing the runtime overhead. 
 As larger maps often lead to longer paths, the MDD-E overhead increases accordingly, resulting in a higher fraction in Fig.~\ref{fig:resultall}~(f) than in Fig.~\ref{fig:resultall}~(c).

\subsection{Experiment 2: Varying Value of \texorpdfstring{$t_{floor}$}{tfloor}}

\begin{figure}[tb]
    \centering
     \vspace{3mm}
     \includegraphics[width=1\linewidth]{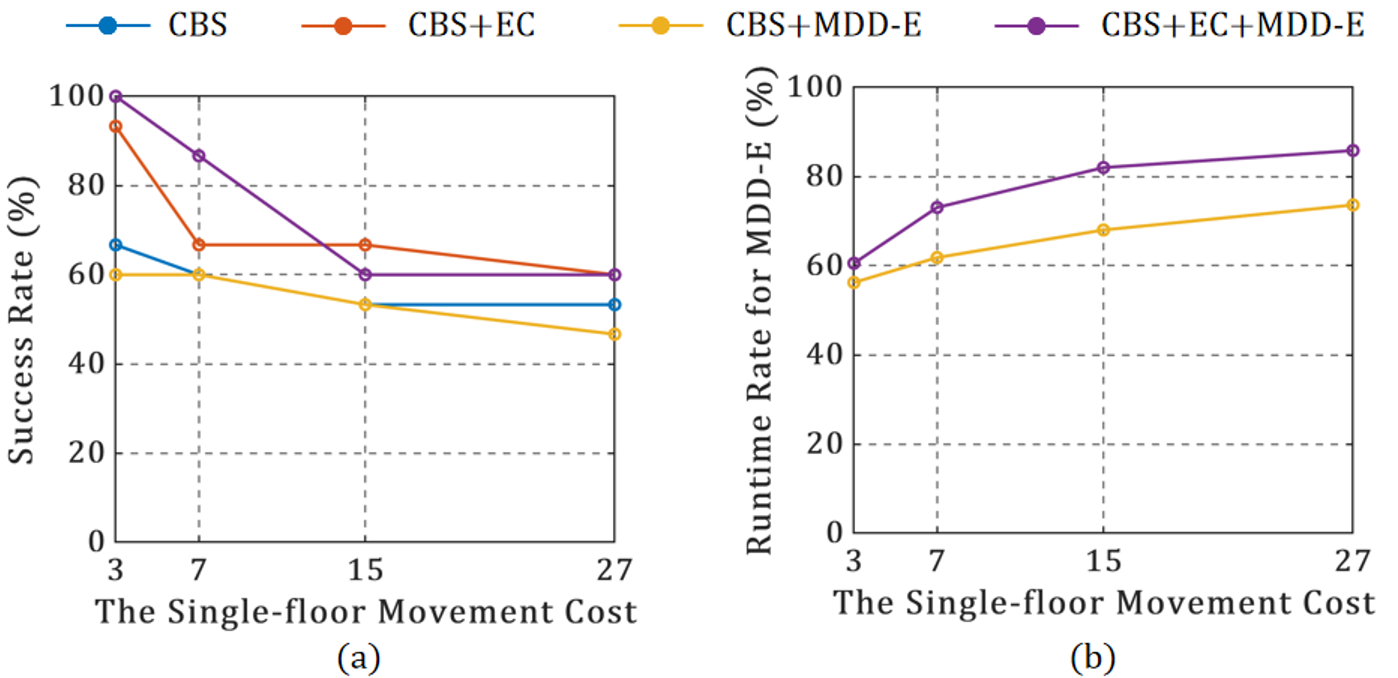}
        \vspace{-5mm}
    \caption{(a): the success rates of CBS, CBS+EC, CBS+MDD-E, and CBS+EC+MDD-E as $t_{floor}$ increases, on a 5-floor map where each floor is the \texttt{random-16-16-10} map.
     (b):  the fractions of total search time spent on MDD-E and related techniques in CBS+MDD-E and CBS+EC+MDD-E within the time limit over all instances in each group with varying values of $t_{floor}$.
\vspace{-3mm}     }
     \label{fig:changetii}
     
\end{figure}

We then investigate the effect of increasing $t_{floor}$.
We use five floors connected by three elevators, where each floor is based on the \texttt{random-16-16-10} map and the number of agents is fixed at $N=8$.
Fig.~\ref{fig:changetii}~(a) presents the success rates. 
As $t_{floor}$ increases, both CBS+EC and CBS+EC+MDD-E maintain higher success rates due to the advantages of EC compared with CBS.
Combined with Fig.~\ref{fig:changetii}~(b), which shows the fraction of total search time spent on MDD-E and related techniques, the overhead of MDD-E increases with $t_{floor}$. This, in turn, leads to a decrease in the success rates of MDD-E based algorithms, since longer agent paths result in larger and more costly MDD-E and joint MDD-E structures.  
For $t_{floor} \ge 15$, elevator conflicts in the successfully solved cases tend to be less severe and less sensitive to increases in $t_{floor}$, resulting in stable success rates.

\subsection{Experiment 3: Varying Number of Floors }

\begin{figure}[tb]
    \centering
    \includegraphics[width=1\linewidth]{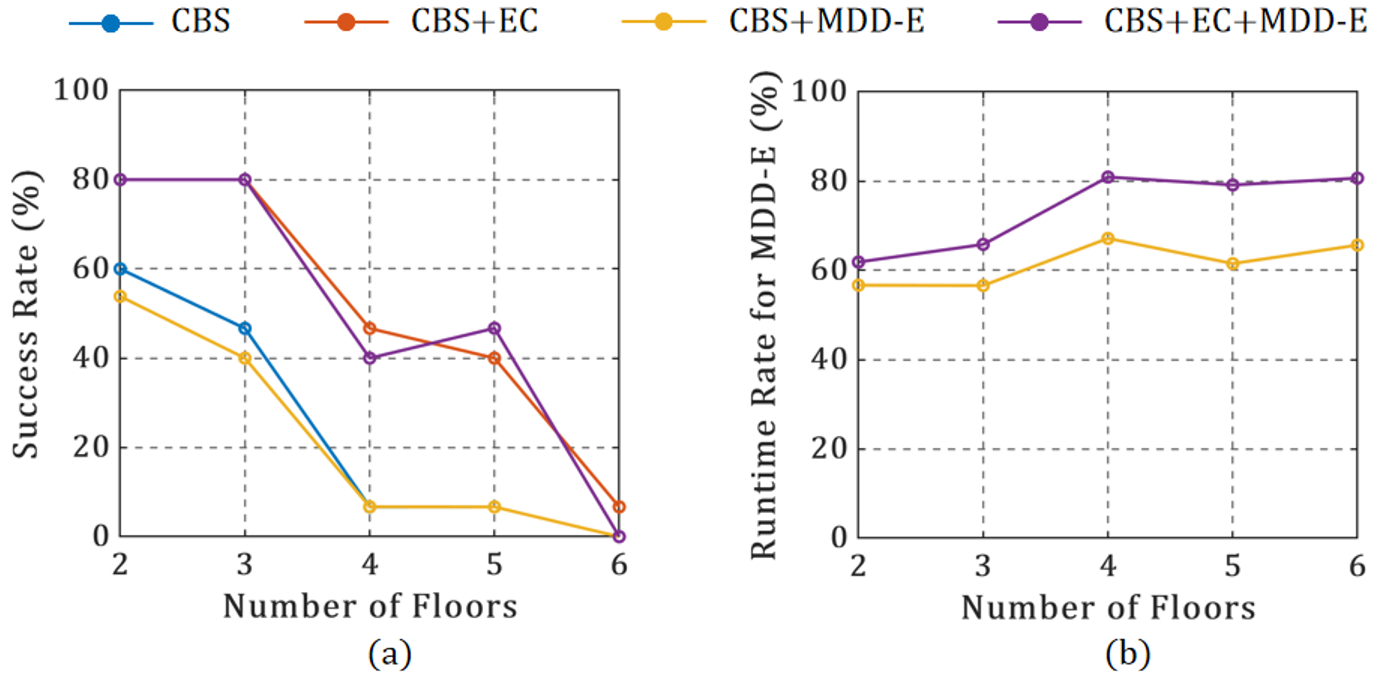}
        \vspace{-5mm}
    \caption{(a): the success rates of CBS, CBS+EC, CBS+MDD-E, and CBS+EC+MDD-E as the number of floors increases, with each floor being the \texttt{random-16-16-10} map.
     (b):  the fractions of total search time spent on MDD-E and related techniques in CBS+MDD-E and CBS+EC+MDD-E within the time limit over all instances in each group with varying numbers of floors.
\vspace{-1mm}     }
     \label{fig:changefloors}
\end{figure}
The third experiment investigates the effect of varying the number of floors with 3 elevators, where each floor is based on the \texttt{random-16-16-10} map and the number of agents is fixed at $N=11$.
Fig.~\ref{fig:changefloors}~(a) presents the success rates, showing that both CBS+EC and CBS+EC+MDD-E maintain higher success rates than CBS, and this advantage persists even as the number of floors increases due to the efficient conflict resolution provided by EC. However, as the number of floors increases, the average elevator occupied time per agent also increases, resulting in stronger coupling among agents and a gradual decline in success rates. Intuitively, the shared use of elevators introduces a combinatorial growth in potential conflicts as the number of floors increases.
Fig.~\ref{fig:changefloors}~(b) shows that the runtime ratio spent on MDD-E and related techniques grows significantly with the number of floors. When the number of floors exceeds 4, this rate in CBS+EC+MDD-E already surpasses 80\%, indicating that constructing and maintaining MDD-E imposes substantial overhead in such scenarios.



%% file: conclusion.tex
This paper investigates a new problem, MAPF-E, and develops efficient methods for resolving elevator conflicts, addressing the coupling of multiple agents introduced by elevators. By using elevator constraints, elevator conflicts can be resolved with only a single expansion while still guaranteeing optimal solutions. By extending the information stored in MDD nodes, we construct a new structure called MDD-E, which serves as the foundation for applying PC and BP techniques. Experimental results demonstrate the advantages of our new approaches under different settings. 

Future work includes fast construction of MDD-E, considering elevators holding more than one agents, extension to agents with asynchronous/continuous-time actions~\cite{ren21loosely,2025_AAAI_LSRP_ShuaiZhou,2026_AAMAS_CBSAA_XuemianWu}, and combination with task planning~\cite{ren_cbss_2023,ren21ms}.

%% file: references.bib
@inproceedings{2026_AAMAS_CBSAA_XuemianWu,
author = {Wu, Xuemian and Zhao, Shizhe and Ren, Zhongqiang},
title = {Conflict-Based Search for Multi Agent Path Finding with Asynchronous Actions},
year = {2026},
isbn = {9798400723179},
publisher = {International Foundation for Autonomous Agents and Multiagent Systems},
address = {Richland, SC},
booktitle = {Proceedings of the 25th International Conference on Autonomous Agents and Multiagent Systems},
pages = {1174–1182},
numpages = {9},
location = {Paphos, Cyprus},
series = {AAMAS '26}
}

@inproceedings{2025_AAAI_LSRP_ShuaiZhou,
  title = {Loosely Synchronized Rule-Based Planning for Multi-Agent Path Finding with Asynchronous Actions},
  author = {Zhou, Shuai and Zhao, Shizhe and Ren, Zhongqiang},
  booktitle = {Proceedings of the AAAI conference on artificial intelligence},
  year = {2025},
  month = apr,
  pages = {14763-14770},
  volume = {39},
  doi = {10.1609/aaai.v39i14.33618}
}

@inproceedings{de2013push,
  title={Push and rotate: cooperative multi-agent path planning},
  author={De Wilde, Boris and Ter Mors, Adriaan W and Witteveen, Cees},
  booktitle={Proceedings of the 2013 international conference on Autonomous agents and multi-agent systems},
  pages={87--94},
  year={2013}
}

@article{okumura2022priority,
  title = {Priority Inheritance with Backtracking for Iterative Multi-agent Path Finding},
  journal = {Artificial Intelligence},
  pages = {103752},
  year = {2022},
  issn = {0004-3702},
  doi = {https://doi.org/10.1016/j.artint.2022.103752},
  author = {Keisuke Okumura and Manao Machida and Xavier Défago and Yasumasa Tamura},
}

@inproceedings{li2021eecbs,
  title={Eecbs: A bounded-suboptimal search for multi-agent path finding},
  author={Li, Jiaoyang and Ruml, Wheeler and Koenig, Sven},
  booktitle={Proceedings of the AAAI Conference on Artificial Intelligence},
  volume={35},
  number={14},
  pages={12353--12362},
  year={2021}
}

@inproceedings{ren21loosely,
	title={Loosely Synchronized Search for Multi-agent Path Finding with Asynchronous Actions},
	author={Ren, Zhongqiang and Rathinam, Sivakumar and Choset, Howie},
	booktitle={2021 IEEE/RSJ International Conference on Intelligent Robots and Systems},
	year={2021},
	organization={IEEE}
}

@article{wagner2015subdimensional,
	title={Subdimensional expansion for multirobot path planning},
	author={Wagner, Glenn and Choset, Howie},
	journal={Artificial Intelligence},
	volume={219},
	pages={1--24},
	year={2015},
	publisher={Elsevier}
}

@article{sharon2012meta,
	title={Meta-Agent Conflict-Based Search For Optimal Multi-Agent Path Finding.},
	author={Sharon, Guni and Stern, Roni and Felner, Ariel and Sturtevant, Nathan R},
	journal={SoCS},
	volume={1},
	pages={39--40},
	year={2012}
}

@inproceedings{Wang20243DWarehouse,
  title={MAPF in 3D Warehouses: Dataset and Analysis},
  author={Qian Wang and Rishi Veerapaneni and Yu Wu and Jiaoyang Li and Maxim Likhachev},
  booktitle={Proceedings of the Thirty-Fourth International Conference on Automated Planning and Scheduling (ICAPS 2024)},
  pages={623--632},
  year={2024},
  organization={IEEE}
}

@inproceedings{guni2013ict,
	title={The increasing cost tree search for optimal multi-agent pathfinding.},
	author={Guni Sharon and Roni Stern and Meir Goldenberg and Ariel Felne},
	pages={470--495},
	year={2013},
	month = {2},
	volume = {195},
	journal = {Artificial Intelligence},
}

@inproceedings{phillips2011sipp,
	title={Sipp: Safe interval path planning for dynamic environments},
	author={Phillips, Mike and Likhachev, Maxim},
	booktitle={2011 IEEE International Conference on Robotics and Automation},
	pages={5628--5635},
	year={2011},
	organization={IEEE}
}

@inproceedings{yu2013structure_nphard,
	title={Structure and intractability of optimal multi-robot path planning on graphs},
	author={Yu, Jingjin and LaValle, Steven M},
	booktitle={Twenty-Seventh AAAI Conference on Artificial Intelligence},
	year={2013}
}

@article{sharon2015conflict,
	title={Conflict-based search for optimal multi-agent pathfinding},
	author={Sharon, Guni and Stern, Roni and Felner, Ariel and Sturtevant, Nathan R},
	journal={Artificial Intelligence},
	volume={219},
	pages={40--66},
	year={2015},
	publisher={Elsevier}
}

@inproceedings{boyarski2015icbs,
	title={ICBS: improved conflict-based search algorithm for multi-agent pathfinding},
	author={Boyarski, Eli and Felner, Ariel and Stern, Roni and Sharon, Guni and Tolpin, David and Betzalel, Oded and Shimony, Eyal},
	booktitle={Twenty-Fourth International Joint Conference on Artificial Intelligence},
	year={2015}
}

@inproceedings{barer2014suboptimal,
	title={Suboptimal variants of the conflict-based search algorithm for the multi-agent pathfinding problem},
	author={Barer, Max and Sharon, Guni and Stern, Roni and Felner, Ariel},
	booktitle={Seventh Annual Symposium on Combinatorial Search},
	year={2014}
}

@inproceedings{li2019multi,
	title={Multi-agent path finding for large agents},
	author={Li, Jiaoyang and Surynek, Pavel and Felner, Ariel and Ma, Hang and Kumar, TK Satish and Koenig, Sven},
	booktitle={Proceedings of the AAAI Conference on Artificial Intelligence},
	volume={33},
	pages={7627--7634},
	year={2019}
}

@article{ren_cbss_2023,
	title = {{CBSS}: {A} {New} {Approach} for {Multiagent} {Combinatorial} {Path} {Finding}},
	volume = {39},
	issn = {1941-0468},
	shorttitle = {{CBSS}},
	doi = {10.1109/TRO.2023.3266993},
	number = {4},
	urldate = {2024-02-27},
	journal = {IEEE Transactions on Robotics},
	author = {Ren, Zhongqiang and Rathinam, Sivakumar and Choset, Howie},
	year = {2023},
	pages = {2669--2683},
}

@article{sharon_conflict-based_2015,
  title={Conflict-based search for optimal multi-agent pathfinding},
  author={Sharon, Guni and Stern, Roni and Felner, Ariel and Sturtevant, Nathan R},
  journal={Artificial Intelligence},
  volume={219},
  pages={40--66},
  year={2015},
  publisher={Elsevier},
}

@online{zsrobotics2025,
  author    = {ZS Robotics},
  title     = {WCS system},
  year      = {2025},
  url       ={https://www.zsrobotics.com/en/zsmart.html},
  note      = {Accessed: 2025-08-16},
}

@article{li_pairwise_2021,
  title={Pairwise symmetry reasoning for multi-agent path finding search},
  author={Li, Jiaoyang and Harabor, Daniel and Stuckey, Peter J and Ma, Hang and Gange, Graeme and Koenig, Sven},
  journal={Artificial Intelligence},
  volume={301},
  pages={103574},
  year={2021},
  publisher={Elsevier}
}

@inproceedings{ren21ms,
  author = {Ren, Zhongqiang and Rathinam, Sivakumar and Choset, Howie},
  booktitle = {2021 IEEE International Conference on Robotics and Automation (ICRA)},
  title = {MS*: A New Exact Algorithm for Multi-agent Simultaneous Multi-goal Sequencing and Path Finding},
  year = {2021},
  volume = {},
  number = {},
  pages = {11560-11565},
  doi = {10.1109/ICRA48506.2021.9561779}
}
